\documentclass[conference]{IEEEtran}

\usepackage[utf8]{inputenc}
\usepackage{amsmath,amssymb,amsthm}
\usepackage{graphicx}
\usepackage{cite}
\usepackage{booktabs}
\usepackage{tabularx}
\usepackage[table]{xcolor}
\usepackage{enumitem}
\usepackage{url}
\usepackage{hyperref}
\usepackage{balance}
\newtheorem{theorem}{Theorem}
\newtheorem{corollary}{Corollary}
\newtheorem{definition}{Definition}

\begin{document}

\title{The Competence Shadow: Theory and Bounds of AI Assistance in Safety Engineering}

\author{\IEEEauthorblockN{Umair Siddique}
	\IEEEauthorblockA{Independent Research \\
		Ottawa, ON, Canada}}

\maketitle

\begin{abstract}

As AI assistants become increasingly integrated into safety engineering workflows, a critical question emerges: does AI assistance improve the quality of safety analysis, or does it introduce systematic blind spots that accumulate invisibly and surface only through post-deployment incidents? This paper develops a formal framework for governing AI assistance in safety analysis as a core component of the broader development and certification of Physical AI. We first establish why safety engineering resists the benchmark-driven evaluation approaches that have enabled rapid AI progress in domains such as software development. Unlike tasks with objective ground truth, safety competence is irreducibly multidimensional, constrained by context-dependent correctness, inherent incompleteness, and legitimate expert disagreement. We formalize this structure through a five-dimensional competence framework capturing domain knowledge, standards expertise, operational experience, contextual understanding, and judgment.

We introduce the competence shadow: the systematic narrowing of human reasoning induced by AI-generated safety analysis, whereby the AI's partial competence profile limits which safety issues are hypothesized, which scenarios are evaluated, and which mitigations are retained. The shadow is not what the AI presents, but what it prevents from being considered. We formalize four canonical human-AI collaboration structures and derive closed-form performance bounds for each, demonstrating that the competence shadow compounds multiplicatively across various cognitive  mechanisms to produce degradation far exceeding naive additive estimates.

The central finding is that AI assistance in safety engineering is a collaboration design problem, not a software procurement decision. The same tool degrades or improves analysis quality depending entirely on how it is used, not how capable it is. We derive non-degradation conditions that are central to developing shadow-resistant safety engineering workflows, and call for a paradigm shift from tool qualification toward workflow qualification as the foundation for accelerated development of trustworthy and certifiable Physical AI.

\end{abstract}

\section{Introduction}
\label{sec:introduction}

Consider a company developing humanoid robots for elder care in residential environments.\footnote{We use humanoid robots as a running example for concreteness. The analysis and frameworks developed here apply broadly to any physical AI system whose deployment requires safety certification, including autonomous vehicles, mobile robots, medical devices and drones.} Their software team uses large language models (LLMs) for code generation and documentation~\cite{chen2021evaluating, peng2023impact}. Product managers use AI to draft user stories and acceptance criteria. The company leadership observes these successes and asks a natural question: if other technical disciplines can accelerate through AI assistance, why not safety engineering?
Some tool vendors and consulting firms in the safety analysis space reinforce this expectation, claiming 60 to 80 percent reductions in certification effort through  \emph{AI-Powered Safety Analysis}.

The question facing safety teams is not whether to integrate AI assistance but how to do so responsibly. Comprehensive hazard analysis for a robot operating in unstructured human environments requires synthesizing requirements from multiple safety standards~\cite{iso13482, iso12100, iec61508, iso13849} while addressing deployment contexts with fundamentally different risk profiles. Home environments with isolated elderly users present different hazards than institutional settings with trained staff. The same failure mode that causes minor inconvenience in one context could result in serious injury in another. Safety analysis is not a one-time deliverable but an iterative process that guides architecture decisions and requires sufficient verification before independent assessment bodies can certify the system.

This context makes one question especially important: does AI assistance improve the \textit{quality} of safety analysis, or can it introduce systematic gaps that remain invisible until post-deployment incidents reveal them? When a safety engineer reviews an AI-generated analysis as their starting point, does it enhance their ability to identify hazards comprehensively, or does it anchor their thinking to the AI's initial framing? And when management observes that AI can draft preliminary analyses in minutes, will they maintain the time allocations that enable engineers to think deeply about edge cases?

Recent empirical work provides important data points. Since 2024, the industry has seen a surge in capability-first studies demonstrating that LLMs can generate industry-standard safety artifacts, including Fault Tree Analysis (FTA)~\cite{shetiya2026fta, shentu2025fta}, Failure Mode and Effects Analysis (FMEA) for automotive systems~\cite{elhassani2025fmea, singh2025fmea}, and Hazard and Operability (HAZOP) reports for process industries~\cite{elhosary2025hazop}. While these studies report significant efficiency gains, they typically rely on empirical methodologies to measure accuracy while leaving the underlying human-AI interaction unformalized. Diemert and Weber~\cite{diemert2023coha} found that 64\% of LLM-generated hazard analysis responses contained useful information. Collier et al.~\cite{collier2025risk} demonstrated that LLMs perform adequately at creative hazard identification but struggle with numerically grounded risk assessment. Qi et al.~\cite{qi2025hazop} evaluated four leading LLMs on HAZOP automation and found that the proportion of semantically valid scenarios remained between 0.19 and 0.37. Bharadwaj et al.~\cite{bharadwaj2025hallucinations} revealed significant variability across safety-critical hazard categories. Charalampidou et al.~\cite{charalampidou2024hazard} found that approximately half of ChatGPT-4-generated unsafe control actions in STPA required expert correction. These findings indicate a technology that is genuinely useful yet inconsistent.

The cognitive science literature offers a crucial lens. Parasuraman and Manzey~\cite{parasuraman2010complacency} established that automation complacency and bias arise from fundamental attentional mechanisms, not mere carelessness, consistent with earlier empirical evidence from Skitka et al.~\cite{skitka1999automation} and subsequently synthesized in a systematic review by Goddard et al.~\cite{goddard2012automation}. Tversky and Kahneman~\cite{tversky1974judgment} established that initial estimates anchor subsequent reasoning, even when adjustment is warranted. Romeo and Conti~\cite{romeo2025automation}, Horowitz and Kahn~\cite{horowitz2024automation}, and Bansal et al.~\cite{bansal2019beyond} have documented automation bias and miscalibrated mental models as drivers of human-AI team underperformance. Dell'Acqua et al.~\cite{dellacqua2023jagged} demonstrated through a field experiment with 758 consultants that AI assistance degraded performance by 19 percentage points on tasks outside the AI's capability frontier, providing direct evidence that collaboration structure determines whether AI helps or harms. 
Chen et al.~\cite{chen2025interface} showed that interface design choices in high-stakes settings shape whether collaboration enhances or degrades performance. Gao et al.~\cite{gao2024taxonomy} have called for systematic frameworks to govern these interactions. Yet no formal theory connects these cognitive phenomena to the specific structure of safety engineering tasks.

We make three contributions toward closing this gap:

\begin{enumerate}[leftmargin=*]
	\item A \textbf{five-dimensional competence framework} that formalizes safety engineering competence and explains why this domain resists the benchmark-driven evaluation that has enabled rapid AI progress in software engineering.

	\item A \textbf{theory of the competence shadow} that identifies four mechanisms through which AI-generated output degrades human safety reasoning and shows that these mechanisms compound multiplicatively.

	\item \textbf{Formal collaboration structures with performance bounds.} We define four canonical human-AI collaboration structures, derive closed-form performance bounds for each, and establish non-degradation conditions that determine when AI assistance is safe to deploy. From these bounds we derive practical guidance for structure selection, safety assessment, and organizational governance. The framework complements emerging standards efforts such as ISO/IEC~TS~22440-1 Annex~C~\cite{isoiec22440}, which addresses AI tool qualification but lacks a model of how AI outputs cast a competence shadow over human cognition.
\end{enumerate}

\section{The Five-Dimensional Competence Framework}
\label{sec:competence}

\subsection{From Scalar Expertise to a Competence Vector}

In software development, competence can be rigorously benchmarked. Code either compiles or it does not. Tests either pass or they break. Performance either meets specifications or it falls short. This objective verifiability explains why LLMs have achieved remarkable success on coding tasks, with datasets like HumanEval~\cite{chen2021evaluating} and Mostly Basic Programming Problems (MBPP)~\cite{austin2021program} providing concrete benchmarks with verifiable correctness metrics.

Safety engineering lacks this collapse to objective outcomes. Safety analyses rarely admit a single correct answer that can be automatically verified, and the multiple dimensions of safety competence remain irreducibly distinct because there is no single ground truth to which they converge. We model safety competence as a five-dimensional vector:
\begin{equation}
	\mathbf{C} = \langle D, S, E, C, J \rangle
	\label{eq:competence}
\end{equation}

\noindent Table~\ref{tab:competence-dimensions} describes each dimension. 

\begin{table}[htbp]
	\centering
	\caption{Dimensions of Safety Engineering Competence}
	\label{tab:competence-dimensions}
	\footnotesize
	\begin{tabular}{@{}p{1.2cm}p{6.3cm}@{}}
		\toprule
		\textbf{Dim.} & \textbf{Description}                                                                                                                                                                                                \\
		\midrule
		\rowcolor{blue!5}
		\textbf{D}    & \textit{\bf Domain knowledge.} System physics, component interactions, failure propagation mechanisms, and domain-specific engineering principles.                                                                      \\
		\addlinespace[3pt]
		\textbf{S}    & \textit{\bf Standards expertise.} Knowledge of applicable safety standards (e.g., IEC~61508~\cite{iec61508}, ISO~26262~\cite{iso26262}, ISO~13482~\cite{iso13482}), compliance procedures, and regulatory approval processes. \\
		\addlinespace[3pt]
		\rowcolor{blue!5}
		\textbf{E}    & \textit{\bf Operational experience.} Accumulated knowledge from debugging production systems, investigating incidents, and observing how systems actually fail rather than how theory predicts they should.             \\
		\addlinespace[3pt]
		\textbf{C}    & \textit{\bf Contextual understanding.} Recognition of how the same technical failure has different severity depending on deployment environment, user population, and available mitigations.                            \\
		\addlinespace[3pt]
		\rowcolor{blue!5}
		\textbf{J}    & \textit{\bf Judgment.} Risk calibration developed through empirical feedback loops between predictions and outcomes. Ability to assess severity and likelihood based on how predictions map to real-world consequences. \\
		\bottomrule
	\end{tabular}
\end{table}

These dimensions are conceptually distinct, though empirically correlated. A compliance specialist may memorize ISO~26262 requirements (high $S$) without understanding automotive control systems deeply enough to recognize novel failure modes (low $D$). An academic researcher may develop strong theoretical domain knowledge (high $D$) while having limited exposure to how systems fail in deployment (low $E$). The competence vector captures variation that a single scalar ``expertise level'' would obscure.

No single safety professional dominates across all dimensions.\footnote{Some engineers do develop broad strength across multiple dimensions, often by spending years as domain experts before moving into safety roles or the reverse. But these profiles reflect careers of deliberate accumulation, and organizational processes cannot be designed around their availability.} Figure~\ref{fig:competence-profiles} illustrates an example how different specialists exhibit distinct competence profiles, and how a team combining complementary profiles achieves comprehensive coverage.

\begin{figure}[htbp]
	\centering
	\includegraphics[width=\columnwidth]{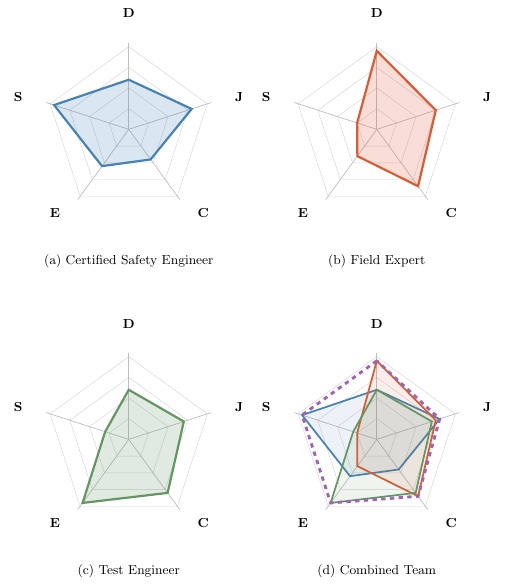}
	\caption{Complementary competence profiles forming a complete team. Panel~(a): consider a certified safety engineer, strong in standards (S) and judgment (J), limited in operational exposure. Panel~(b): field expert, strong in domain knowledge (D) and contextual understanding (C). Panel~(c): test engineer, strong in operational experience (E) and contextual knowledge (C). Panel~(d): combined team coverage (dashed envelope).}
	\label{fig:competence-profiles}
\end{figure}

\subsection{Fundamental Barriers to Benchmarking}

The multidimensional structure clarifies why safety competence resists benchmarking. Three fundamental barriers reflect the intrinsic nature of safety knowledge, not limitations of current measurement techniques.

\textbf{Context-dependent ground truth.} The same failure mode demands 
different severity ratings across deployment contexts, and no 
context-independent standard exists to adjudicate between them. 
An unexpected arm motion in a humanoid robot is LOW severity 
in a supervised industrial assembly cell, HIGH in a rehabilitation 
clinic where patients have limited mobility, and CRITICAL in a 
home environment where an elderly user with dementia may not 
recognize or respond to warning signals. Each rating is correct 
within its context.

\textbf{Inherent incompleteness.} Whether the analysis identifies \textit{all} failures that could lead to harm cannot be verified prior to deployment. Novel failure modes reveal themselves only through incidents, often years after design decisions are finalized. 
Incident databases that could reveal these gaps are typically inaccessible outside organizations due to liability concerns and intellectual property protections.

\textbf{Legitimate expert disagreement.} Five experienced engineers analyzing the same system will construct five different fault trees employing different decomposition strategies, operating at different granularity levels, and bringing different experiential priors. This disagreement reflects the irreducibly perspectival nature of safety judgment, not a deficiency to be eliminated.

These barriers also explain why ``qualifying'' an AI-assistance based  safety analysis tool under
existing frameworks (ISO~26262 Part~8, IEC~61508 Part~3) is insufficient: tool qualification
addresses deterministic correctness, not the cognitive dynamics of human-AI
interaction. Safety engineering workflows designed for deterministic 
systems are increasingly strained by the complexity and pace of modern 
autonomous systems, and several frameworks have been proposed to 
modernize safety lifecycle practices~\cite{siddique2020safetyops, carlan2025safety}.
However, none formally address how AI assistance itself shapes the 
quality of the resulting analysis, and the cognitive mechanisms through 
which this occurs remain unformalized. 
Recent findings from industry reports confirm that while LLMs can brainstorm a wide volume of hazards, their outputs are frequently described as generic or lacking the depth of a subject matter expert~\cite{elhassani2025fmea}.
Critically, the gap in operational experience~($E$), contextual 
understanding~($C$), and judgment~($J$) is not primarily an 
information deficit addressable by retrieval-augmented generation 
or knowledge graphs: these dimensions are formed through sustained 
feedback loops between predictions and real-world consequences, 
and whether they can be meaningfully approximated in AI systems 
remains an open research challenge distinct from the collaboration 
design problem this paper addresses.

\subsection{Implications for AI-Assisted Safety Engineering}

The inability to rigorously benchmark AI-generated safety analysis 
is not a temporary limitation of current tools but reflects the 
irreducible structure of safety competence itself. Yet teams are 
increasingly relying on LLMs to generate fault trees and populate 
FMEA tables, often without formal understanding of which competence 
dimensions the AI can and cannot contribute to. The resolution is 
not to qualify AI assistants in isolation but to design collaboration 
structures that deliberately match AI capabilities to task requirements 
along specific competence dimensions. This requires a formal model 
of how AI outputs interact with human reasoning, which is what the 
remainder of this paper develops.

\section{The Competence Shadow}
\label{sec:shadow}

Knowing which competence dimensions an AI system lacks does not, by itself, explain the risks of AI-assisted safety analysis. The deeper issue lies in how AI outputs interact with human cognition during the analysis process itself. We term this the \textit{competence shadow}: the AI's partial competence profile casts a shadow over the human analyst's reasoning, systematically narrowing which hazards are hypothesized, which scenarios are explored, and which findings are retained. The shadow is not what the AI presents, but what it prevents from being considered. When an engineer reviews an AI-generated hazard analysis, four mechanisms produce this shadow. These mechanisms are consistent with the automation bias phenomenon established by Skitka et al.~\cite{skitka1999automation} and subsequently confirmed across professional domains~\cite{parasuraman2010complacency}.

\textbf{Mechanism 1: Scope Framing ($\alpha_{\text{frame}}$).} AI-generated analysis establishes an implicit ontology of what constitutes a relevant failure mode. Engineers working from this frame readily identify hazards fitting the AI's taxonomy, but failure modes requiring alternative decomposition strategies become cognitively harder to generate~\cite{tversky1974judgment}. This framing effect is consistent with Green and Chen's~\cite{green2019principles} finding that algorithmic recommendations anchor human judgment even when presented as advisory. Consider the elder-care robot: an AI trained on general robotics literature might frame hazards around mechanical failure and collision avoidance, while entirely missing the interaction patterns specific to cognitively impaired users in home environments. The AI's frame may be technically sound yet incomplete in ways reflecting its competence gaps in $E$ and $C$.

\textbf{Mechanism 2: Attention Allocation Bias ($\beta$).} Engineers reviewing AI output face an implicit resource allocation decision: spend time verifying what the AI found, or search for what it may have missed? In practice, verification dominates: reviewing AI output for errors is bounded, offers predictable returns, and produces visible progress, while independent exploration is open-ended and uncertain. The general automation bias literature confirms that operators working with automated systems consistently prioritize monitoring system output over independent analysis~\cite{parasuraman2010complacency, goddard2012automation}. We estimate that this dynamic leads engineers to allocate approximately 60 to 70 percent of effort to verification, leaving only 30 to 40 percent for independent exploration.

\textbf{Mechanism 3: Confidence Asymmetry ($\eta_{\text{disagree}}$).} When engineers identify a failure mode that the AI also identified, concordance functions as confirmatory evidence and retention probability approaches unity. When engineers identify a hazard \textit{absent} from the AI's output, cognitive dissonance arises: their judgment stands against the implicit judgment of a system that has processed far more safety analyses than any individual. The natural response is self-doubt. This dynamic mirrors Bansal et al.'s~\cite{bansal2019beyond} finding that miscalibrated mental models of AI capability systematically degrade team performance. This asymmetry creates differential retention probabilities that systematically favor issues within the AI's competence profile.

\textbf{Mechanism 4: Organizational Time Compression ($\gamma$).} When management observes AI completing preliminary analyses in minutes, organizational pressure to compress safety analysis timelines intensifies. In safety engineering, this dynamic creates a time compression ratchet (Section~\ref{sec:practice}) that we model through the parameter $\gamma \in (0, 1]$. Time compression directly degrades baseline human capability ($q_{h,\text{eff}} = \gamma \cdot q_h$) and amplifies all three cognitive mechanisms: reduced time forces greater reliance on verification, raises retention thresholds for AI-contradicting findings, and makes independent reasoning outside the AI's frame prohibitive.

These four mechanisms operate through distinct cognitive and organizational channels, yet their joint effect is multiplicative rather than additive: each mechanism scales the residual human capability left by those preceding it. How severe this joint effect becomes depends on the structure of the collaboration itself, which we formalize next.

\section{Collaboration Structures and Performance Bounds}
\label{sec:bounds}

We formalize four collaboration structures that make fundamentally different commitments about information flow and task decomposition. These commitments determine which shadow mechanisms are active and therefore bound the quality of the resulting analysis.

Let $S = \{s_1, \ldots, s_m\}$ be the complete set of safety-critical issues. Define quality as $Q = |S_{\text{identified}}|/|S|$, and let $q_h$ and $q_{AI}$ denote human and AI baseline identification probabilities.

\subsection{Structure Definitions}

\begin{definition}[Serial Dependency, $\pi_1$]
	\label{def:serial}
	The AI generates an initial analysis $S_{AI} \subseteq S$. Human reviewers observe $S_{AI}$ in full before conducting their review. Information flows from AI to human during the analysis phase. The final analysis is $S_{\emph{final}} = S_{AI} \cup S_{\emph{human\text{-}review}}$. All four shadow mechanisms are structurally active: scope framing ($\alpha_{\emph{frame}}$), attention allocation ($\beta$), confidence asymmetry ($\eta_{\emph{disagree}}$), and time compression ($\gamma$).
\end{definition}

\begin{definition}[Independent Analysis and Synthesis, $\pi_2$]
	\label{def:independent}
	All agents ($k$ human analysts and one AI system) perform analysis independently without observing each other's output. No information flows between agents during the analysis phase. A designated lead analyst then reconciles the independent results through a structured synthesis phase, resolving duplicates and adjudicating conflicts. The final analysis is $S_{\emph{final}} = S_{AI} \cup \bigcup_{i=1}^{k} S_{h_i}$. Because humans never see AI output during analysis, three shadow mechanisms are structurally eliminated ($\alpha_{\emph{frame}} = \beta = \eta_{\emph{disagree}} = 1$). Only time compression ($\gamma$) remains, though typically at reduced severity.
\end{definition}

\begin{definition}[Tool Augmentation, $\pi_3$]
	\label{def:tool}
	Human analysts perform all core safety reasoning. The AI is confined to auxiliary tasks: formatting, compliance cross-referencing, template population, and documentation. Information flows from human to AI in the form of auxiliary queries only. The final analysis is $S_{\emph{final}} = S_{\emph{human\text{-}core}} \cup S_{\emph{AI\text{-}aux}}$, where $S_{\emph{AI\text{-}aux}}$ contributes only to presentation, not to safety content. No shadow mechanisms are active on core analysis, provided a clean decomposition boundary is maintained between core reasoning tasks (requiring $\langle D, E, C, J \rangle$) and auxiliary tasks (requiring primarily $\langle S \rangle$). Boundary errors occur with probability $\varepsilon$, affecting a fraction $\delta$ of safety issues.
\end{definition}

\begin{definition}[Human-Initiated Exploration, $\pi_4$]
	\label{def:hie}
	The human analyst performs an initial analysis $S_h$ independently, without AI involvement. The human then provides $S_h$ to the AI, requesting identification of gaps, alternative propagation paths, or additional failure modes. Information flows from human to AI and back to human during a revision phase. The final analysis is $S_{\emph{final}} = S_h \cup S_{\emph{AI\text{-}new}}$, where $S_{\emph{AI\text{-}new}}$ denotes novel findings the AI contributes beyond the human's starting point. Because the human establishes a clean-room analysis before AI engagement, scope framing ($\alpha_{\emph{frame}}$) and attention allocation ($\beta$) are structurally eliminated. Confidence asymmetry ($\eta_{\emph{disagree}}$) remains active during the revision phase, and time compression ($\gamma$) applies to the initial manual phase.
\end{definition}

Figure~\ref{fig:protocols-overview} illustrates the structural differences in information flow across all four structures.

\begin{figure}[htbp]
	\centering
	\includegraphics[width=\columnwidth]{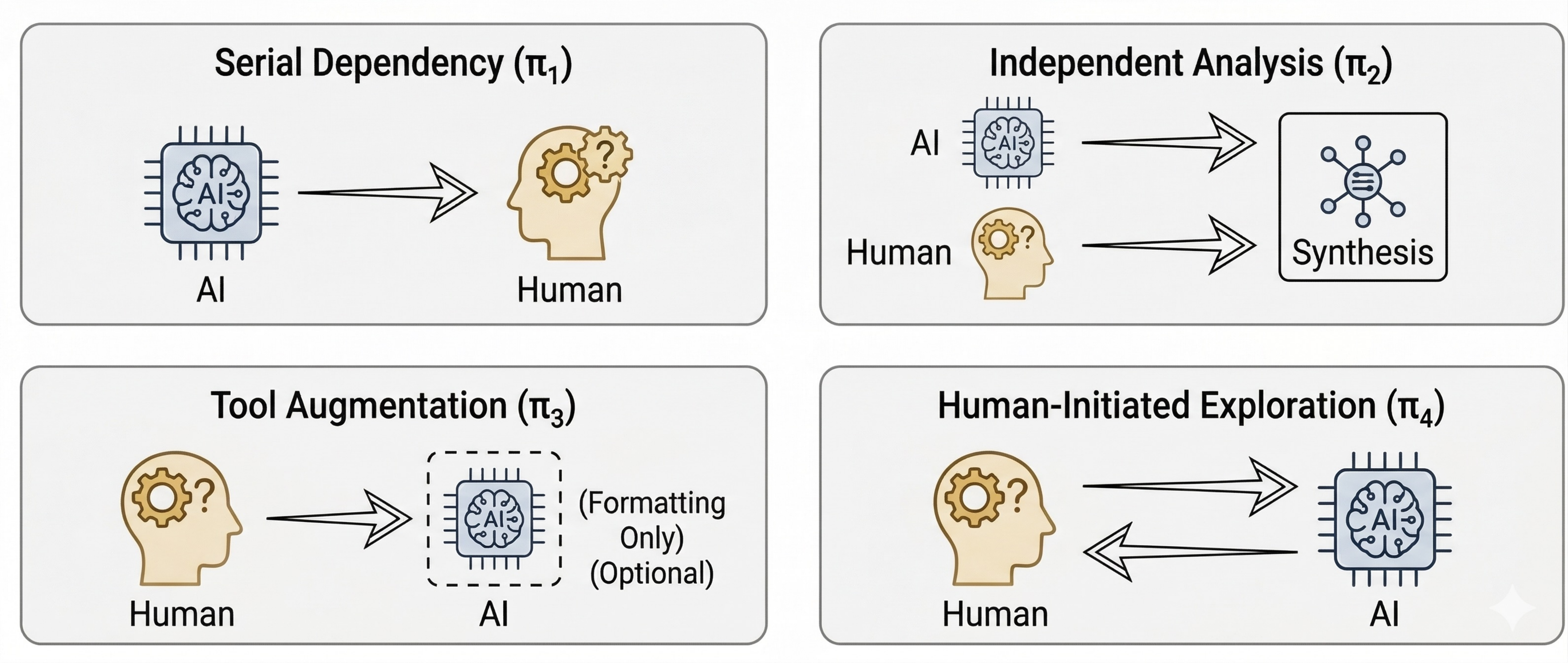}
	\caption{Four canonical human-AI collaboration structures for AI-assisted safety analysis, differing in information flow and active shadow mechanisms.}
	\label{fig:protocols-overview}
\end{figure}

\subsection{Serial Dependency: The Compounding Shadow}

Serial Dependency ($\pi_1$) is the unexamined default in current safety engineering research. Advanced frameworks, such as the Aegis multi-agent system~\cite{shi2024aegis} and pre-populated HAZOP tables~\cite{elhosary2025hazop}, operate exclusively under this structure. It is the dominant deployment pattern in practice, driven by its apparent efficiency: the AI does the heavy lifting, and humans ``check the work''. 
Our theory proves that $\pi_1$ is the structure most susceptible
to the multiplicative compounding of the competence shadow, and the
significant epistemic risk it carries is precisely what its apparent
efficiency conceals. Because all four shadow mechanisms are
active under $\pi_1$, we first formalize their compound effect.

\begin{definition}[Effective Anchoring Coefficient]
	\label{def:alpha_eff}
	Under Serial Dependency ($\pi_1$) with scope framing $\alpha_{\emph{frame}}$, attention allocation $\beta$, and confidence asymmetry $\eta_{\emph{disagree}}$, the effective anchoring coefficient is:
	\begin{equation}
		\alpha_{\emph{eff}} = \alpha_{\emph{frame}} \cdot \beta \cdot \eta_{\emph{disagree}}
		\label{eq:alpha_eff}
	\end{equation}
\end{definition}

The parameters used throughout this section are illustrative 
values chosen to represent plausible moderate-shadow conditions, 
consistent with the automation bias literature~\cite{parasuraman2010complacency, 
goddard2012automation, horowitz2024automation} and with patterns 
observed in AI-assisted knowledge work~\cite{dellacqua2023jagged}. 
We assume scope framing reduces the accessible failure space to 
approximately 80\% of its full extent ($\alpha_{\text{frame}} = 0.8$), 
that roughly 70\% of review effort goes to verification rather than 
independent exploration ($\beta = 0.3$, reflecting 30\% retained 
for exploration), that engineers retain approximately 70\% of 
AI-discordant findings ($\eta_{\text{disagree}} = 0.7$), and that 
time compression reduces available analysis time by 40\% 
($\gamma = 0.6$). These values are not empirically measured in 
safety engineering workflows; the qualitative conclusions hold 
across a wide range of parameter values, and developing 
standardized methodologies for measuring shadow parameters 
is the most significant research opportunity 
this framework opens for the community (Section~\ref{sec:discussion}).

With these representative parameters, the effective anchoring 
coefficient is $\alpha_{\text{eff}} = 0.168$, meaning a 
shadow-affected reviewer retains only 16.8\% of their independent 
identification capability before time compression is applied.

\begin{theorem}[Serial Dependency with Compounding Shadow]
	\label{thm:serial}
	Under $\pi_1$ with compounding shadow and $k$ independent human reviewers:
	\begin{equation}
		\mathbb{E}[Q(\pi_1)] = q_{AI} + (1 - q_{AI}) \cdot \left[1 - (1 - \alpha_{\emph{eff}} \cdot \gamma \cdot q_h)^k\right]
	\end{equation}
	where $\alpha_{\emph{eff}}$ is from~\eqref{eq:alpha_eff} and $\gamma \in (0,1]$ is the time compression ratio.
\end{theorem}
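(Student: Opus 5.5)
The plan is to compute the expectation issue by issue and then use linearity of expectation. Since $Q = |S_{\text{identified}}|/|S|$ and $S_{\text{identified}} = S_{\text{final}} = S_{AI} \cup S_{\text{human-review}}$ under $\pi_1$ (Definition~\ref{def:serial}), we have
\[
\mathbb{E}[Q(\pi_1)] = \frac{1}{m}\sum_{j=1}^{m} \Pr[s_j \in S_{\text{final}}].
\]
It therefore suffices to show that each issue $s_j$ is captured with probability equal to the right-hand side of the theorem. We will make the modelling assumptions explicit. Each issue is identified by the AI with probability $q_{AI}$. Conditional on the AI's output, the $k$ reviewers act independently of one another. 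Identification probabilities are homogeneous across issues, as implicit in the scalar parameters $q_h, q_{AI}$.

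Next we partition on whether $s_j \in S_{AI}$. If the AI identifies $s_j$, which happens with probability $q_{AI}$, then $s_j \in S_{\text{final}}$ regardless of the reviewers, because the union includes $S_{AI}$. This contributes the term $q_{AI}$. If the AI misses $s_j$, with probability $1-q_{AI}$, the issue can enter $S_{\text{final}}$ only through some reviewer. Here we derive the effective per-reviewer probability by applying the four mechanisms in sequence, each scaling the residual capability left by the previous ones, as argued in Section~\ref{sec:shadow}.

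Time compression reduces the baseline to $q_{h,\text{eff}} = \gamma q_h$. The issue must then fall inside the AI-induced frame ($\alpha_{\text{frame}}$). It must be reached by the exploratory rather than the verification share of effort ($\beta$). Finally, being AI-discordant, it must be retained despite the confidence asymmetry ($\eta_{\text{disagree}}$). Treating these stages as conditionally independent gates, the probability that a given reviewer both finds and retains $s_j$ is $\alpha_{\text{frame}}\beta\eta_{\text{disagree}}\gamma q_h = \alpha_{\text{eff}}\gamma q_h$, using Definition~\ref{def:alpha_eff}. By independence across the $k$ reviewers, the probability that none of them captures $s_j$ is $(1-\alpha_{\text{eff}}\gamma q_h)^k$. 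Hence the probability that at least one captures it is $1-(1-\alpha_{\text{eff}}\gamma q_h)^k$.

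Combining the two cases via the law of total probability gives the stated formula for each $s_j$. Averaging over $j$ leaves it unchanged.

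The main obstacle is justifying the multiplicative gate structure. The paper asserts that the mechanisms compose multiplicatively, but a proof needs them to be modelled as conditionally independent filters applied to an AI-missed issue. We will state this as the modelling assumption that defines ``compounding shadow.'' We will also note that the mechanisms apply only to AI-missed issues, since concordant findings are retained with probability near one and are already counted in $q_{AI}$.

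A secondary subtlety is that the reviewers see the same $S_{AI}$ and are therefore correlated through it. Conditioning on the event $s_j \notin S_{AI}$ removes this dependence for the issue at hand, so the independence used above is conditional independence given that event, which is what we will assume.
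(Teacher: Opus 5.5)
Your proposal is correct and matches the paper's proof: condition on whether the AI identifies the issue, give each reviewer per-issue probability $\alpha_{\text{eff}}\gamma q_h$ for AI-missed issues, use independence across the $k$ reviewers, and combine via the law of total probability. The differences are only presentational. You write out the linearity-of-expectation averaging and state as explicit assumptions the multiplicative-gate and conditional-independence structure that the paper takes as given. You also get retention of AI-found issues directly from the union in Definition~\ref{def:serial}, where the paper appeals to ``perfect verification.''
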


\begin{proof}
	For issue $s \in S$: if AI identifies $s$ (probability $q_{AI}$), humans retain it (probability 1 under perfect verification). If AI misses $s$ (probability $1 - q_{AI}$), each human independently identifies it with probability $\alpha_{\text{eff}} \cdot \gamma \cdot q_h$, reflecting scope framing, attention allocation, confidence asymmetry, and time-compressed baseline capability. The issue is identified if at least one reviewer finds it: $P(s \text{ identified} \mid s \notin S_{AI}) = 1 - (1 - \alpha_{\text{eff}} \cdot \gamma \cdot q_h)^k$. By the law of total probability, the result follows.
\end{proof}

\begin{corollary}[Non-Degradation Condition]
	\label{cor:nondeg}
	Serial Dependency with a single reviewer preserves quality relative to human baseline ($\mathbb{E}[Q(\pi_1)] \geq q_h$) if and only if:
	\begin{equation}
		q_{AI} \geq \frac{q_h(1 - \alpha_{\emph{eff}} \cdot \gamma)}{1 - \alpha_{\emph{eff}} \cdot \gamma \cdot q_h}
	\end{equation}
\end{corollary}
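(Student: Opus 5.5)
The plan is to specialize Theorem~\ref{thm:serial} to a single reviewer and then rearrange the inequality $\mathbb{E}[Q(\pi_1)] \geq q_h$ into a condition on $q_{AI}$. To keep the algebra light, write $a = \alpha_{\text{eff}}\cdot\gamma$. Setting $k=1$ in Theorem~\ref{thm:serial}, the bracket $1-(1-a q_h)^1$ collapses to $a q_h$. This gives
\begin{equation*}
\mathbb{E}[Q(\pi_1)] = q_{AI} + (1-q_{AI})\, a q_h = a q_h + q_{AI}(1 - a q_h),
\end{equation*}
which is affine in $q_{AI}$.

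Next I would subtract $a q_h$ from both sides of $\mathbb{E}[Q(\pi_1)] \geq q_h$. The non-degradation inequality becomes
\begin{equation*}
q_{AI}(1 - a q_h) \geq q_h - a q_h = q_h(1-a).
\end{equation*}
This inequality is equivalent to the original one, since only a constant was subtracted. Dividing by $1 - a q_h$ then yields exactly the stated threshold
\begin{equation*}
q_{AI} \geq \frac{q_h(1-\alpha_{\text{eff}}\gamma)}{1-\alpha_{\text{eff}}\gamma q_h}.
\end{equation*}

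The one step that needs care is this division, because the ``if and only if'' requires $1 - a q_h > 0$. I would justify positivity from the parameter ranges. We have $\gamma\in(0,1]$, $q_h\in[0,1]$, and $\alpha_{\text{eff}}$ is a product of factors in $[0,1]$, since each mechanism is a retention fraction. Hence $a q_h \le 1$, with equality only in the degenerate case $\alpha_{\text{frame}}=\beta=\eta_{\text{disagree}}=\gamma=q_h=1$. In that case both sides of $\mathbb{E}[Q(\pi_1)]\ge q_h$ equal $1$, so non-degradation holds trivially, while the stated threshold has the indeterminate form $0/0$. I would therefore state the assumption $\alpha_{\text{eff}}\gamma q_h<1$ explicitly and note the degenerate case separately.

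Under that assumption, dividing by the positive quantity $1 - a q_h$ preserves the direction of the inequality, so each step is reversible and the equivalence follows. I could also remark that the threshold never exceeds $1$. Indeed, the numerator is at most the denominator, because $q_h - a q_h \le 1 - a q_h$ is equivalent to $q_h\le 1$. So the condition is always attainable, with the boundary case $q_{AI}=1$ holding exactly when $q_h=1$.
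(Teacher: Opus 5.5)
Your proof is correct and takes the same route the paper intends: the paper states the corollary without a separate proof, as an immediate consequence of Theorem~\ref{thm:serial} with $k=1$, and your rearrangement $q_{AI}(1-\alpha_{\text{eff}}\gamma q_h)\ge q_h(1-\alpha_{\text{eff}}\gamma)$ followed by division is exactly that algebra. Your explicit check that $1-\alpha_{\text{eff}}\gamma q_h>0$, with the degenerate all-ones case set aside, is a worthwhile addition the paper leaves implicit, since it is what makes the ``if and only if'' valid.
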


\textbf{Numerical illustration.} Suppose $q_h = 0.85$ and $q_{AI} = 0.65$. Under realistic shadow conditions ($\alpha_{\text{eff}} = 0.168$, $\gamma = 0.6$): $\mathbb{E}[Q(\pi_1)] = 0.65 + 0.168 \times 0.6 \times 0.85 \times 0.35 = 0.68$. This is a \textbf{20\% degradation} from human baseline, despite AI assistance. The non-degradation threshold requires $q_{AI} \geq 0.74$.

Figure~\ref{fig:waterfall} visualizes the degradation structure, tracing the sequential impact of each mechanism.

\begin{figure}[htbp]
	\centering
	\includegraphics[width=\columnwidth]{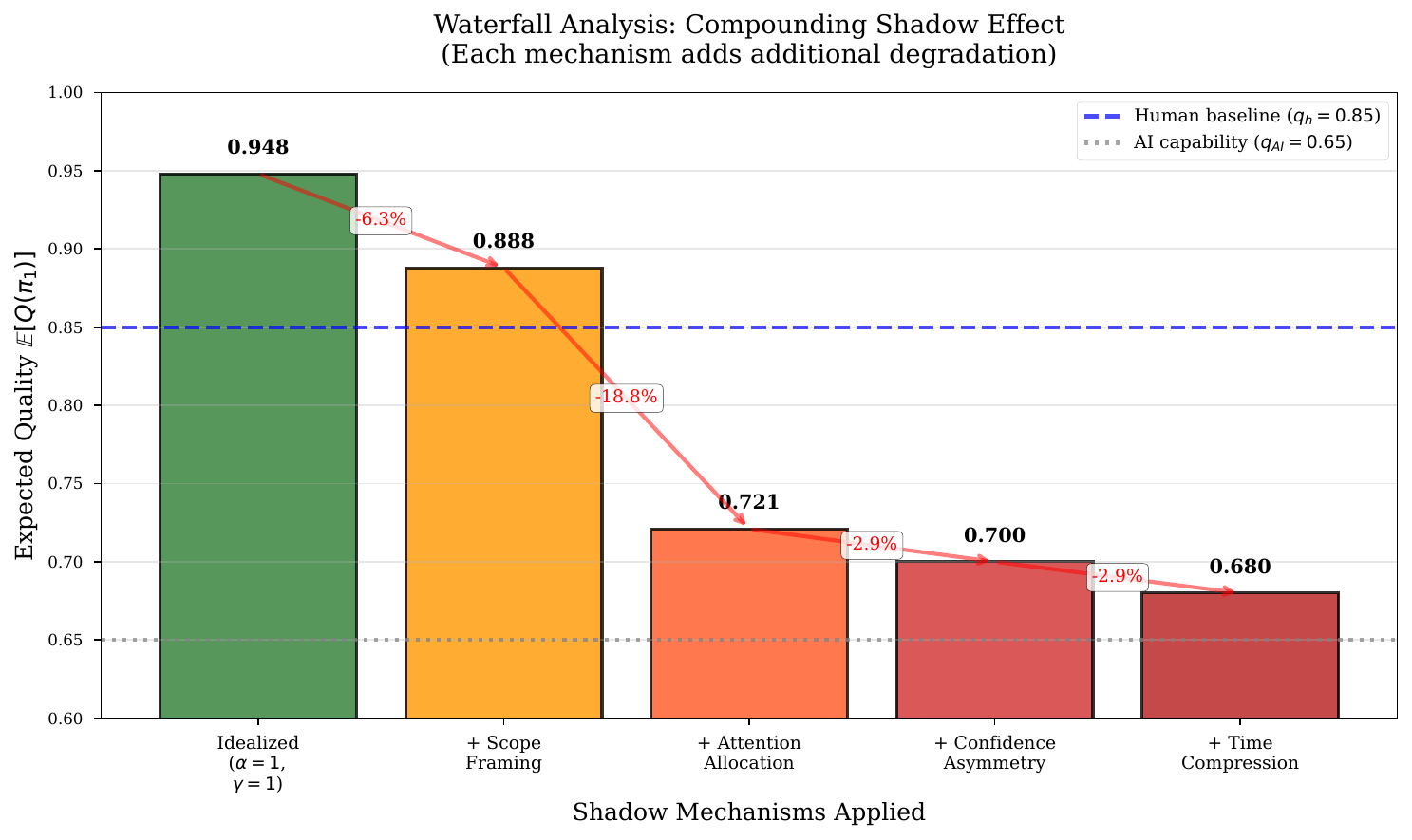}
	\caption{Waterfall analysis of the compounding shadow. Starting from the idealized case at 0.948, each mechanism sequentially reduces quality: scope framing (0.888), attention allocation (0.721), confidence asymmetry (0.700), and time compression (0.680). Final quality is 20\% below human baseline.}
	\label{fig:waterfall}
\end{figure}

\subsection{Independent Analysis: Eliminating the Shadow by Construction}

Independent Analysis ($\pi_2$) prevents information flow during the analysis phase, eliminating the competence shadow by construction rather than by discipline. This structural approach aligns with Bu\c{c}inca et al.'s~\cite{bucinca2021trust} finding that cognitive forcing functions significantly reduce 
over-reliance on AI in assisted decision-making.
\begin{theorem}[Independent Analysis Performance]
	\label{thm:independent}
	Under $\pi_2$ with $k$ humans, AI capability $q_{AI}$, and structured correlation $\rho$ with shared blind spot probability $q_{\emph{shared}}$:
	\begin{equation}
		\mathbb{E}[Q(\pi_2)] = 1 - \rho \cdot q_{\emph{shared}} - (1 - \rho)(1 - q_h)^k(1 - q_{AI})
	\end{equation}
\end{theorem}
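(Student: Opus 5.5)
The plan is to compute the probability that a single fixed issue $s \in S$ is identified, and then apply linearity of expectation. Since $Q = |S_{\text{identified}}|/|S| = \frac{1}{m}\sum_{j} \mathbf{1}[s_j \text{ identified}]$, we have $\mathbb{E}[Q(\pi_2)] = P(s \text{ identified})$ whenever every issue has the same identification probability. Under $\pi_2$, Definition~\ref{def:independent} gives $S_{\text{final}} = S_{AI} \cup \bigcup_{i=1}^k S_{h_i}$. So $s$ is missed exactly when the AI and all $k$ humans miss it. Because no information flows during analysis, the shadow factors are set to $\alpha_{\text{frame}} = \beta = \eta_{\text{disagree}} = 1$. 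The residual time compression is absorbed into the human identification probability: I will read $q_h$ in the statement as the effective $\gamma \cdot q_h$, or take $\gamma = 1$, and state this explicitly.

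The main step is to make ``structured correlation $\rho$ with shared blind spot probability $q_{\text{shared}}$'' precise, since the statement leaves the model implicit. I would commit to a two-regime mixture. A latent indicator $Z_s \sim \mathrm{Bernoulli}(\rho)$ marks whether $s$ lies in the correlated regime.
\begin{itemize}
\item If $Z_s = 1$, all agents share a common blind spot. With probability $q_{\text{shared}}$ every agent, human and AI alike, misses $s$ together. Otherwise $s$ is found, since at least one of the union's contributors catches an issue that is not a shared blind spot.
\item If $Z_s = 0$, the $k+1$ agents act conditionally independently. Each human misses $s$ with probability $1-q_h$, and the AI misses it with probability $1-q_{AI}$.
\end{itemize}
The expectation must be stated carefully at this point: $\rho$ is a mixing weight, not a Pearson correlation coefficient.

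Given this model, the computation is a single application of the law of total probability over $Z_s$:
\[
P(s \text{ missed}) = \rho\, q_{\text{shared}} + (1-\rho)\,(1-q_h)^k (1-q_{AI}).
\]
Here the second term uses the conditional independence of the $k$ humans and the AI in the uncorrelated regime, which is exactly the no-information-flow property of $\pi_2$. Taking the complement and using linearity of expectation over $S$ then yields the claimed formula.

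The synthesis phase also needs a short remark. The lead analyst's deduplication and adjudication remove no true issue; this is the analogue of the ``perfect verification'' assumption in the proof of Theorem~\ref{thm:serial}. Under that assumption the final set really is the union.

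I expect the main obstacle to be justifying the modelling choice rather than the algebra. The correlated regime must be shown to be consistent with the marginals $q_h$ and $q_{AI}$, or else it must be stipulated that those marginals describe the independent regime only. I would first try to present it as a stipulated generative model, mirroring how Theorem~\ref{thm:serial} stipulates retention probabilities. I would then add that the formula reduces to the fully independent bound $1-(1-q_h)^k(1-q_{AI})$ at $\rho = 0$, which serves as a sanity check.
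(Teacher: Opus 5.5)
Your argument is correct. The paper states Theorem~\ref{thm:independent} without any proof; only a numerical illustration follows it. Your per-issue law-of-total-probability computation over a latent regime indicator is the same style the paper uses to prove Theorems~\ref{thm:serial} and~\ref{thm:hie}. Your two-regime mixture is a natural model under which the stated formula holds verbatim: $\rho\,q_{\text{shared}}$ is the correlated miss mass and $(1-\rho)(1-q_h)^k(1-q_{AI})$ is the independent one. Your synthesis-phase remark is already built into Definition~\ref{def:independent}, which defines $S_{\text{final}}$ directly as the union.

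On the obstacle you flag, resolve it by stipulation, not by marginal consistency. In your model every agent misses each shared-blind-spot issue, so a single human's marginal miss rate is at least $\rho\,q_{\text{shared}} + (1-\rho)(1-q_h)$. Matching the marginal $1-q_h$ would therefore force $q_{\text{shared}} \le \min(1-q_h,\,1-q_{AI})$ for $\rho>0$. That fails at the paper's own illustrative values: $q_{\text{shared}} = 0.4$ exceeds both $1-q_{AI} = 0.35$ and $1-q_h = 0.15$. So $q_h$ and $q_{AI}$ must be read as detection probabilities within the independent regime, not as marginals.

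Your handling of $\gamma$ (set $\gamma = 1$ or fold it into $q_h$) matches the paper. Its illustration plugs $q_h = 0.85$ in directly, despite the remark that $\gamma \approx 0.85$ still applies under $\pi_2$.

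One small wording fix: ``no information flow'' rules out causal coupling during analysis but does not by itself give statistical independence. Conditional independence in the $Z_s = 0$ regime is a modelling stipulation, which is precisely why the $\rho$ component is there.
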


\noindent This structure remains vulnerable only to time compression ($\gamma$), typically at reduced severity ($\gamma \approx 0.85$ versus $\gamma \approx 0.6$ under Serial Dependency) because its design makes the case for adequate time allocation more legible to management.

\textbf{Numerical illustration.} With $q_h = 0.85$, $q_{AI} = 0.65$, $k = 3$, $\rho = 0.3$, $q_{\text{shared}} = 0.4$: $\mathbb{E}[Q(\pi_2)] = 1 - 0.12 - 0.7 \times (0.15)^3 \times 0.35 \approx 0.88$. This is \textbf{3 percentage points above human baseline} and \textbf{20 points above shadow-affected Serial Dependency}.

\subsection{Tool Augmentation: Clean Decomposition}

Tool Augmentation ($\pi_3$) confines AI to tasks where the competence shadow cannot form. The critical requirement is a clean decomposition boundary: core tasks requiring $\langle D, E, C, J \rangle$ must be separated from auxiliary tasks requiring primarily $\langle S \rangle$.

\begin{theorem}[Tool Augmentation Performance]
	\label{thm:tool}
	Under $\pi_3$ with clean decomposition and boundary error probability $\varepsilon$ affecting fraction $\delta$ of issues:
	\begin{equation}
		\mathbb{E}[Q(\pi_3)] = q_h \cdot (1 - \varepsilon \cdot \delta)
	\end{equation}
\end{theorem}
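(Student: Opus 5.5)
The proof follows the per-issue template of Theorem~\ref{thm:serial}: compute the probability that a fixed issue $s \in S$ appears in $S_{\text{final}}$, then use linearity of expectation to get $\mathbb{E}[Q(\pi_3)]$. Since $Q = |S_{\text{identified}}|/|S| = \frac{1}{m}\sum_{j=1}^m \mathbf{1}[s_j \text{ identified}]$, we have $\mathbb{E}[Q] = \frac{1}{m}\sum_j P(s_j \text{ identified})$. It therefore suffices to show that the average per-issue identification probability equals $q_h(1-\varepsilon\delta)$.

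\textbf{Step 1: the clean case.} By Definition~\ref{def:tool}, when the decomposition boundary holds, all core reasoning is done by humans and no shadow mechanism is active. This means $\alpha_{\text{frame}} = \beta = \eta_{\text{disagree}} = 1$, and we take $\gamma = 1$ on core analysis, so there is no time compression of the human baseline. Because $S_{\text{AI-aux}}$ contributes only to presentation and not to safety content, it adds no issues. Hence $P(s \text{ identified} \mid \text{no boundary error affecting } s) = q_h$.

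\textbf{Step 2: the boundary-error event.} A boundary error occurs with probability $\varepsilon$, and when it occurs it affects a fraction $\delta$ of issues. For a uniformly chosen (or exchangeable) issue $s$, the probability that $s$ is affected is $\varepsilon\delta$, assuming the error and the affected set are independent of the human's identification process. I will adopt the conservative modeling choice that an affected issue is routed to the auxiliary side, where only $\langle S \rangle$ competence is applied and core reasoning is lost, so it is identified with probability $0$. The law of total probability then gives $P(s \text{ identified}) = (1-\varepsilon\delta)\,q_h + \varepsilon\delta \cdot 0 = q_h(1-\varepsilon\delta)$. Averaging over $s$ yields the theorem.

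\textbf{Main obstacle.} The difficult part is justifying, rather than simply assuming, two things. The first is that affected issues contribute zero, as opposed to some residual probability such as $q_{AI}$ or a shadow-degraded $q_h$. The second is that the boundary-error event is independent of the human's success on each issue. My plan is to state both explicitly as modeling assumptions implicit in Definition~\ref{def:tool}, and to remark that the expression is therefore a lower bound. If affected issues retain some identification probability $r>0$, the same computation gives $q_h(1-\varepsilon\delta) + \varepsilon\delta\, r \ge q_h(1-\varepsilon\delta)$, so the stated formula remains a valid worst-case value.
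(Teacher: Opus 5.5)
Your argument is correct. The paper states this theorem without a written proof, and your per-issue law-of-total-probability computation is exactly the reasoning implicit in its formula. Unaffected issues are found with probability $q_h$, since no shadow mechanism, including $\gamma$, acts on core analysis. An issue is affected with probability $\varepsilon\delta$, independently of human identification, and then contributes nothing. This mirrors the paper's proofs of Theorems~\ref{thm:serial} and~\ref{thm:hie}, and stating the zero-contribution and independence assumptions explicitly, as you plan, is the right move, because the stated equality (rather than a lower bound) holds only under them.
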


\noindent With $\varepsilon = 0.03$, $\delta = 0.5$: $\mathbb{E}[Q(\pi_3)] = 0.85 \times 0.985 \approx 0.837$, representing only 1.5\% degradation from baseline while achieving 30\% time savings on auxiliary work. When decomposition fails ($\varepsilon$ rises to 0.15), quality drops to 0.80.

\subsection{Human-Initiated Exploration: Reversing the Information Flow}

Human-Initiated Exploration ($\pi_4$) addresses the structural gap between Serial Dependency's risky efficiency and Independent Analysis's resource cost. By requiring the human to complete an initial analysis before engaging the AI, $\pi_4$ temporally segregates the framing mechanism: the human's baseline capability $q_h$ is preserved because no AI output exists to anchor their reasoning during the critical divergent-thinking phase. The AI then functions as a challenger rather than an initiator.

Recent work by Shentu and Trapp~\cite{shentu2025fta} demonstrates a version of $\pi_4$, where a generative AI co-pilot suggests new sub-causes for human-driven fault trees. While this effectively extends the divergent coverage of the analysis, our framework reveals that the effectiveness of this structure is bounded by the AI's propensity to anchor on the human's initial system decomposition.

\begin{theorem}[Human-Initiated Exploration Performance]
	\label{thm:hie}
	Under $\pi_4$ with human baseline $q_h$, AI capability $q_{AI}$, reverse anchoring factor $\rho_{\emph{rev}} \in [0,1]$, and acceptance rate $\eta_{\emph{accept}}$ for AI-suggested findings:
	\begin{equation}
		\mathbb{E}[Q(\pi_4)] = q_h + (1 - q_h) \cdot \eta_{\emph{accept}} \cdot (1 - \rho_{\emph{rev}}) \cdot q_{AI}
	\end{equation}
	where $\rho_{\emph{rev}}$ captures the degree to which exposure to the human's initial analysis $S_h$ constrains the AI's exploration. At $\rho_{\emph{rev}} = 0$ the AI explores freely and may identify failure modes entirely outside the human's frame; at $\rho_{\emph{rev}} = 1$ the AI is fully anchored on the human's decomposition and contributes nothing new.
\end{theorem}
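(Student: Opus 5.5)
The argument mirrors the per-issue decomposition used in the proof of Theorem~\ref{thm:serial}. Since $Q = |S_{\text{identified}}|/|S|$, linearity of expectation gives $\mathbb{E}[Q(\pi_4)] = \frac{1}{m}\sum_{s\in S} P(s \in S_{\text{final}})$. It therefore suffices to show that every issue $s$ lands in $S_{\text{final}} = S_h \cup S_{\text{AI-new}}$ with the same probability, namely the right-hand side of the statement. I will fix an arbitrary $s \in S$ and condition on whether the human's clean-room analysis finds it.

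\textbf{Case 1: $s \in S_h$.} By Definition~\ref{def:hie}, the human works without any AI output. Scope framing $\alpha_{\text{frame}}$ and attention allocation $\beta$ are therefore structurally eliminated, so $P(s\in S_h) = q_h$. I read $q_h$ here as already incorporating whatever time compression applies to the manual phase; this matches how the statement uses $q_h$. Because $S_{\text{final}} \supseteq S_h$, the issue is retained with probability $1$. Confidence asymmetry does not act on it, since it is a human-originated finding.

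\textbf{Case 2: $s \notin S_h$,} which has probability $1-q_h$. For $s$ to enter $S_{\text{AI-new}}$, three events must all occur.

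1. The AI's exploration must not be confined to the human's decomposition. I model reverse anchoring as a Bernoulli event: with probability $\rho_{\text{rev}}$, the AI's search is restricted to the frame set by $S_h$ and cannot surface $s$; with probability $1-\rho_{\text{rev}}$, it explores freely.
2. Given free exploration, the AI must identify $s$, which it does with its baseline probability $q_{AI}$.
3. The human must accept the AI-suggested finding during revision, which happens with probability $\eta_{\text{accept}}$. This is where confidence asymmetry enters.

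Assuming these three events are conditionally independent given $s\notin S_h$, the product rule gives $P(s \in S_{\text{AI-new}} \mid s\notin S_h) = (1-\rho_{\text{rev}})\, q_{AI}\, \eta_{\text{accept}}$.

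Applying the law of total probability over the two cases gives $q_h\cdot 1 + (1-q_h)\,\eta_{\text{accept}}(1-\rho_{\text{rev}})\,q_{AI}$. This is uniform in $s$, so averaging over $S$ yields the claimed expectation.

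\textbf{Main obstacle.} The delicate step is justifying the independence in Case~2. The AI's success probability on issues the human missed may differ from its unconditional $q_{AI}$, since human and AI blind spots could be correlated, as in the $\rho$ term of Theorem~\ref{thm:independent}. Reverse anchoring may also not act as an all-or-nothing Bernoulli gate. I would handle both points explicitly as modeling assumptions, in the spirit of the perfect-retention assumption in the proof of Theorem~\ref{thm:serial}:
\begin{itemize}
\item $q_{AI}$ denotes the AI's identification probability on issues outside $S_h$ when it explores unconstrained.
\item $\rho_{\text{rev}}$ is defined as the probability that anchoring on $S_h$ suppresses a finding the AI would otherwise make.
\end{itemize}
With these definitions the factorization becomes exact. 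The endpoint checks then follow directly from the formula: $\rho_{\text{rev}}=1$ gives $q_h$, and $\rho_{\text{rev}}=0$ gives $q_h+(1-q_h)\eta_{\text{accept}}q_{AI}$.
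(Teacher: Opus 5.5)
Your proposal is correct and follows the paper's proof essentially step for step: condition each issue on whether the clean-room human analysis finds it, retain it with probability $1$ if so, and otherwise multiply $(1-\rho_{\text{rev}})\,q_{AI}$ by $\eta_{\text{accept}}$, then apply the law of total probability. Your explicit linearity-of-expectation step, your statement of the conditional-independence assumptions, and your note that $\gamma$ for the manual phase is absorbed into $q_h$ spell out what the paper's proof leaves implicit.
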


\begin{proof}
	For issue $s \in S$: if the human identifies $s$ in the clean-room phase (probability $q_h$), it is retained. If the human misses $s$ (probability $1 - q_h$), the AI may identify it independently with probability $(1 - \rho_{\text{rev}}) \cdot q_{AI}$, reflecting its base capability discounted by the degree of reverse anchoring, and the human accepts the finding with probability $\eta_{\text{accept}}$. Because the human's initial analysis is unshadowed, $q_h$ enters without degradation. By the law of total probability, the result follows.
\end{proof}

\textbf{Numerical illustration.} With $q_h = 0.85$, $q_{AI} = 0.65$, $\rho_{\text{rev}} = 0.30$ (moderate reverse anchoring: the AI is partially constrained by the human's frame but retains substantial independent exploration capability), and $\eta_{\text{accept}} = 0.70$: $\mathbb{E}[Q(\pi_4)] = 0.85 + 0.15 \times 0.70 \times 0.70 \times 0.65 = 0.898$. This is \textbf{5 percentage points above human baseline}, positioning $\pi_4$ between Serial Dependency (68\%) and Independent Analysis (88\%) under moderate reverse anchoring, and competitive with $\pi_2$ under favorable conditions.

Table~\ref{tab:comparison} presents the complete comparison under consistent parameters.

\begin{table}[htbp]
	\centering
	\caption{Four-structure comparison ($q_h=0.85$, $q_{AI}=0.65$)}
	\label{tab:comparison}
	\small
	\begin{tabular}{@{}lcccc@{}}
		\toprule
		\textbf{Metric}       & \textbf{Serial} & \textbf{Indep.} & \textbf{Tool Aug.} & \textbf{HIE} \\
		\midrule
		Expected Quality      & 68\%            & 88\%            & 84\%               & 90\%         \\
		$\Delta$ vs. Baseline & $-17$ pp        & $+3$ pp         & $-1$ pp            & $+5$ pp      \\
		Shadow Mechs. Active & 4/4             & 1/4             & $\varepsilon$ only & 1/4          \\
		\bottomrule
	\end{tabular}
\end{table}

\section{From Bounds to Practice}
\label{sec:practice}

\subsection{Structure Selection for Safety Teams}

The non-degradation condition (Corollary~\ref{cor:nondeg}) gives safety teams a structural decision rule: Serial Dependency preserves analysis quality only when AI capability exceeds a threshold determined by the severity of the competence shadow in that specific workflow. For moderate to strong shadow effects, the required AI capability is substantially higher than what current LLMs demonstrate, suggesting that for most safety analysis tasks today Serial Dependency is likely to degrade quality.

The practical response is to match the structure to the task. A single project involves dozens of distinct analysis activities, and different activities warrant different structures depending on which competence dimensions they demand. The key recommendation is to decompose safety workflows at the activity level and assign structures deliberately, rather than adopting a single collaboration pattern for the entire project. This decomposition is itself a design decision that deserves review, because an incorrect boundary between ``core reasoning'' and ``auxiliary task'' introduces exactly the semantic leakage that degrades Tool Augmentation performance (Theorem~\ref{thm:tool}).

Organizations must also resist the \textit{time compression ratchet}. When AI completes preliminary analyses in minutes, management has a rational basis for tightening schedules, and when shadow effects remain invisible in the output, the ratchet tightens again. The non-degradation condition provides a principled floor: the formal model yields the minimum time allocation below which quality drops beneath human baseline, and treating this allocation as freely negotiable is itself a safety-critical decision. 

\subsection{Implications for Safety Assessment}

Independent safety assessors face a new problem: when an organization submits a hazard analysis for certification review, the assessor must now evaluate not only whether the analysis is technically adequate but whether the process that produced it was epistemically sound. Two FMEA tables can look identical on paper while carrying very different epistemic weight. One may reflect thorough human reasoning augmented by AI-drafted formatting; the other may reflect AI-generated failure modes accepted under scope framing without independent exploration. From the output alone, these scenarios are indistinguishable.

The formal framework suggests three questions that assessors can use to target competence shadow risk:

\begin{enumerate}[leftmargin=*]
	\item \textbf{Which structure governed each analysis activity?} The structure determines which shadow mechanisms were active (Table~\ref{tab:comparison}), and Serial Dependency carries fundamentally different epistemic risk than Independent Analysis.

	\item \textbf{Does AI capability meet the non-degradation threshold?} If Serial Dependency was used, the assessor should ask whether AI performance on comparable tasks exceeds the bound from Corollary~\ref{cor:nondeg}. Serial Dependency without evidence of sufficient AI capability is a risk that should be justified explicitly.

	\item \textbf{What is the epistemic provenance of the safety claims?} When the majority of failure modes trace back to AI-generated content with only editorial human review, the effective quality is bounded by $q_{AI}$ rather than $q_h$.
\end{enumerate}

\section{Discussion and Open Problems}
\label{sec:discussion}

\subsection{Challenged Assumptions}

\textbf{Assumption 1: AI capability alone determines value.} The quality gap between Serial Dependency and Independent Analysis (Table~\ref{tab:comparison}) arises entirely from collaboration structure, consistent with Dell'Acqua et al.'s~\cite{dellacqua2023jagged} field experimental finding that the same AI tool improved performance on some tasks while degrading it on others depending entirely on how it was used. Organizations investing in better models while ignoring structure design may be degrading their safety analyses.

\textbf{Assumption 2: Efficiency gains are unqualified benefits.} When management reduces safety analysis time from 40 to 24 hours ($\gamma = 0.6$), they compound the competence shadow multiplicatively. The resulting quality degradation is invisible in the output, inviting further schedule tightening. As the community moves toward engineering safe and sustainable computing systems, the non-degradation bounds derived here provide the formal grounding: without shadow-resistant structures like Independent Analysis ($\pi_2$), the efficiency gains promised by current capability-first research~\cite{shetiya2026fta, elhassani2025fmea, elhosary2025hazop} risk being offset by the invisible competence shadow.

\textbf{Assumption 3: Tool qualification frameworks are sufficient for AI assistants.} Existing standards and emerging efforts such as ISO/IEC~TS~22440-1 Annex~C~\cite{isoiec22440} model AI as producing outputs that are correct or incorrect, then ask whether humans can catch the errors. Our framework reveals a deeper problem: the most dangerous failure mode is not incorrect AI output but \textit{correct-looking} output that casts a competence shadow over human cognitive performance. An AI-generated hazard list that is plausible but incomplete is not ``malfunctioning'' under current definitions, yet it systematically reduces the reviewer's ability to identify what is missing. More fundamentally, current frameworks lack a concept of collaboration structure and implicitly assume a single workflow (AI generates, human reviews) without recognizing that structurally different arrangements produce dramatically different quality outcomes.

\subsection{Open Problems}

\textbf{Shadow parameter estimation.} Our model treats $\alpha_{\text{frame}}$, $\beta$, $\eta_{\text{disagree}}$, and $\gamma$ as given parameters, but no standardized methodology exists to measure them. This paper provides the theoretical framework and formal structure; empirical calibration is the essential next step. We call for controlled studies in which safety engineers perform hazard analysis under each structure condition, with effort allocation, retention rates, and independent discovery rates measured directly. 

\textbf{Dynamic shadow evolution.} We treat shadow parameters as static, but real deployments involve learning. We conjecture a U-shaped dynamic: an initially strong shadow from novelty, followed by compensation as engineers develop metacognitive awareness, potentially followed by complacency. Formalizing these dynamics requires multi-period models with learning parameters.

\section{Conclusion: Toward Human-Centric Safety Intelligence}
\label{sec:conclusion}

A safety engineer reviewing an LLM-generated fault tree is not 
simply ``checking AI's work.'' They are operating within a cognitive 
environment shaped by the AI's framing, where independent judgment 
is systematically suppressed through mechanisms they cannot 
consciously resist.

We argue for \textit{Human-Centric Safety Intelligence}: the 
principled integration of AI capabilities within collaboration 
structures that preserve and amplify the irreplaceable dimensions 
of human expertise. LLMs genuinely excel at standards-intensive, 
documentation-heavy tasks. The error is in deploying them without 
understanding the cognitive architecture of the resulting human-AI 
system.

Software engineering has HumanEval~\cite{chen2021evaluating}
and MBPP~\cite{austin2021program} to measure AI capability. Safety engineering has nothing comparable. 
Building on these foundations requires
a coordinated research program. We call for: (1)~\textit{structure-aware
evaluation benchmarks} that measure what the human-AI system finds
under specified structure conditions; (2)~\textit{shadow parameter 
databases} where organizations contribute anonymized measurements 
from deployed workflows; (3)~\textit{longitudinal field studies} 
tracking whether engineers develop shadow resistance or complacency; 
and (4)~\textit{standards evolution} extending tool qualification 
frameworks to address collaboration structure classification, shadow 
dynamics, and documentation requirements calibrated to structure risk.

Millions of robots, vehicles, and medical devices will be deployed 
over the coming decade. Whether those systems are genuinely safe 
depends on choices being made today. Safety-critical AI assistance 
is achievable, but it requires shadow-resistant collaboration 
structures, not merely capable models.

\bibliographystyle{IEEEtran}
\bibliography{conference_refs}

\end{document}